\DeclareMathOperator*{\argmin}{arg\,min}
\newenvironment{theorem}[2][Theorem]{\begin{trivlist}
\newtheorem{remark}{Remark}
\item[\hskip \labelsep {\bfseries #1}\hskip \labelsep {\bfseries #2.}]}{\end{trivlist}}
\newcommand{\rbm}[1]{\bm{\mathrm{#1}}}
\def\BState{\State\hskip-\ALG@thistlm}
\def\BState{\State\hskip-\ALG@thistlm}
\begin{document}
\title{Distantly Supervised Named Entity Recognition using Positive-Unlabeled Learning}

\author{Minlong Peng\thanks{{ }{ }Equal contribution.}, Xiaoyu Xing$^*$, Qi Zhang, Jinlan Fu, Xuanjing Huang\\
School of Computer Science, Fudan University, Shanghai, China\\
\{mlpeng16,xyxing18,qz,fujl16,xjhuang\}@fudan.edu.cn}

\date{}

\maketitle
\begin{abstract}

In this work, we explore the way to perform named entity recognition (NER) using only unlabeled data and named entity dictionaries. To this end, we formulate the task as a positive-unlabeled (PU) learning problem and accordingly propose a novel PU learning algorithm to perform the task. We prove that the proposed algorithm can unbiasedly and consistently estimate the task loss as if there is fully labeled data. A key feature of the proposed method is that it does not require the dictionaries to label every entity within a sentence, and it even does not require the dictionaries to label all of the words constituting an entity. This greatly reduces the requirement on the quality of the dictionaries and makes our method generalize well with quite simple dictionaries. Empirical studies on four public NER datasets demonstrate the effectiveness of our proposed method. We have published the source code at \url{https://github.com/v-mipeng/LexiconNER}.

\end{abstract}

\section{Introduction} \label{sec:introduction}

Named Entity Recognition (NER) is concerned with identifying named entities, such as person, location, product and organization names in unstructured text. It is a fundamental component in many natural language processing tasks such as machine translation \cite{babych2003improving}, knowledge base construction \cite{riedel2013relation,shen2012linden}, automatic question answering \cite{bordes2015large}, search \cite{zhu2005espotter}, etc. In this field, supervised methods, ranging from the typical graph models \cite{zhou2002named,mccallum2000maximum,mccallum2003early,settles2004biomedical} to current popular neural-network-based models \cite{chiu2016named,lample2016neural,gridach2017character,Liu18ner,zhang2018chinese}, have achieved great success. However, these supervised methods often require large scale fine-grained annotations (label every word of a sentence) to generalize well. This makes it hard to apply them to label-few domains, e.g., bio/medical domains \cite{deleger2016overview}. 

\begin{figure}[t!] 
\centering
\includegraphics[width=\columnwidth]{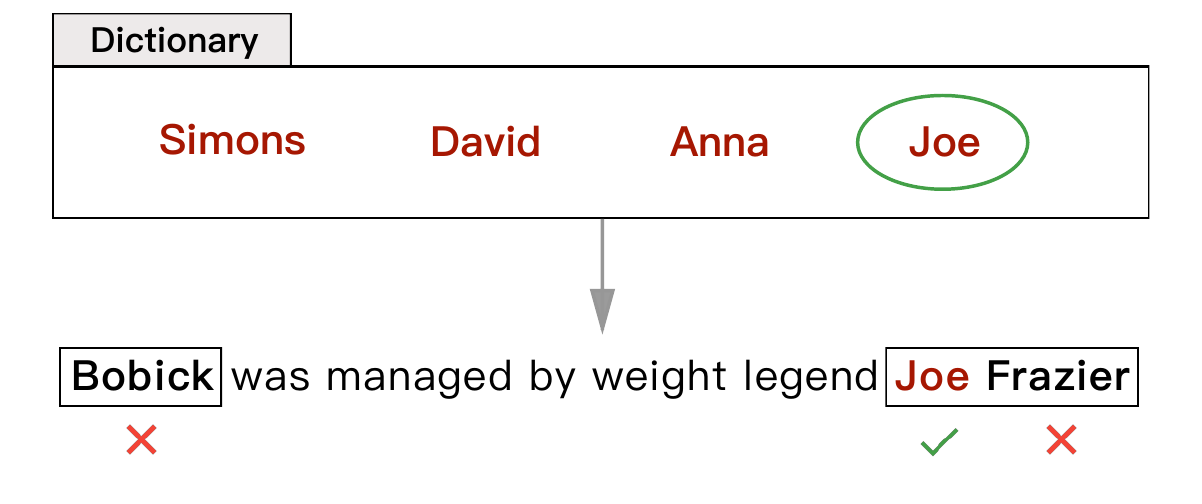}
\caption{Data labeling example for person names using our constructed dictionary. }
\label{fig:label_thumbnail}
\end{figure}

In this work, we explore the way to perform NER using only unlabeled data and named entity dictionaries, which are relatively easier to obtain compared with labeled data. A natural practice to perform the task is to scan through the query text using the dictionary and treat terms matched with a list of entries of the dictionary as the entities \cite{nadeau2006unsupervised,gerner2010linnaeus,liu2015effects,yang2018distantly}. However, this practice requires very high quality named entity dictionaries that cover most of entities, otherwise it will fail with poor performance. As shown in Figure \ref{fig:label_thumbnail}, the constructed dictionary of person names only labels one entity within the query text, which contains two entities ``Bobick" and ``Joe Frazier", and it only labels one word ``Joe" out of the two-word entity ``Joe Frazier". 

To address this problem, an intuitive solution is to further perform supervised or semi-supervised learning using the dictionary labeled data. However, since it does not guarantee that the dictionary covers all entity words (words being of entities) within a sentence, we cannot simply treat a word not labeled by the dictionary as the non-entity word. Take the data labeling results depicted in Figure \ref{fig:label_thumbnail} as an example. Simply treating ``Bobick" and ``Frazier" as non-entity words and then performing supervised learning will introduce label noise to the supervised classifier. Therefore, when using the dictionary to perform data labeling, we can actually only obtain some entity words and a bunch of unlabeled data comprising of both entity and non-entity words. In this case, the conventional supervised or semi-supervised learning algorithms are not suitable, since they usually require labeled data of all classes. 

With this consideration, we propose to formulate the task as a positive-unlabeled (PU) learning problem and accordingly introduce a novel PU learning algorithm to perform the task. In our proposed method, the labeled entity words form the positive (P) data and the rest form the unlabeled (U) data for PU learning. We proved that the proposed algorithm can unbiasedly and consistently estimate the task loss as if there is fully labeled data, under the assumption that the labeled P data can reveal the data distribution of class P. Of course, since words labeled by the dictionary only cover part of entities, it cannot fully reveal data distribution of entity words. To deal with this problem, we propose an adapted method, motivated by the {AdaSampling} algorithm \cite{yang2017positive}, to enrich the dictionary. 
We evaluate the effectiveness of our proposed method on four NER datasets. Experimental results show that it can even achieve comparable performance with several supervised methods, using quite simple dictionaries.

Contributions of this work can be summarized as follows: \textbf{1)} We proposed a novel PU learning algorithm to perform the NER task using only unlabeled data and named entity dictionaries.
\textbf{2)} We proved that the proposed algorithm can unbiasedly and consistently estimate the task loss as if there is fully labeled data, under the assumption that the entities found out by the dictionary can reveal the distribution of entities.
\textbf{3)} To make the above assumption hold as far as possible, we propose an adapted method, motivated by the \textit{AdaSampling} algorithm, to enrich the dictionary.
\textbf{4)} We empirically prove the effectiveness of our proposed method with extensive experimental studies on four NER datasets.

\section{Preliminaries}
 
\subsection{Risk Minimization}
Let $\rbm{X} \in \mathcal{X}$ and $\rm{Y} \in \mathcal{Y}$ be the input and output random variables, where $\mathcal{X} \subset \mathbb{R}^d$ and $\mathcal{Y} = \{0, 1\}$ denote the space of $\rbm{X}$ and $\rm{Y}$, respectively.
Let $f:\mathcal{X} \rightarrow \mathbb{R}$ denote a classifier. A loss function is a map $\ell: \mathbb{R} \times \mathcal{Y} \rightarrow \mathbb{R}^+$. Given any loss function $\ell$ and
a classifier $f$, we define the $\ell$-risk of $f$ by:
\begin{equation} \label{eq:risk}
R_{\ell}(f) = \mathbb{E}_{\rbm{X}, \rm{Y}} \ell(f(\bm{x}), y_{\bm{x}})
\end{equation}
where $\mathbb{E}$ denotes the expectation and its subscript indicates the random variables with respect to which the expectation is taken. 
In ordinary supervised learning, we estimate $R_{\ell}$ with the empirical loss $\hat{R}_{\ell}$:
\begin{equation}
    \hat{R}_{\ell} = \frac{1}{n} \sum_{i=1}^n \ell(f(\bm{x}_i), y_i),
\end{equation}
and update model parameters to learn a classifier $f^{*}$ that minimizes $\hat{R}_{\ell}$:
\begin{equation} \label{eq:risk_minization}
	f^* = \argmin_f \hat{R}_{\ell}(f).
\end{equation}

\subsection{Unbiased Positive-Unlabeled learning}
Unbiased positive-unlabeled learning (uPU) \cite{du2014analysis} aims to estimate $R_{\ell}$ when there are only a set of positive (P) examples and a set of unlabeled (U) examples, which contains both positive and negative examples.
$R_{\ell}$ can also be formulated by:
\begin{equation}
    R_\ell = \pi_n \mathbb{E}_{\rbm{X}|\rm{Y}=0} \ell(f(\bm{x}), 0) + \pi_p \mathbb{E}_{\rbm{X}|\rm{Y}=1} \ell(f(\bm{x}), 1),
\end{equation}
where $\pi_p = \rm{P}(\rm{Y}=1)$ and $\pi_n = \rm{P}(\rm{Y}=0)$.
Note that $\mathbb{E}_{\rbm{X}, \rm{Y}=1} \ell(f(\bm{x}), 1)$ can be effectively estimated using positive data. Therefore, the main problem of PU learning is how to estimate $\mathbb{E}_{\rbm{X}|\rm{Y}=0} \ell(f(\bm{x}), 0)$ without using negative labeled data. To this end, it further formulates:
\begin{align*}
    \pi_n \mathbb{E}_{\rbm{X}|\rm{Y}=0} \ell(f(\bm{x}), 0) &= \mathbb{E}_{\rbm{X}} \ell(f(\bm{x}), 0) \\
    & -\pi_p \mathbb{E}_{\rbm{X}|\rm{Y}=1} \ell(f(\bm{x}), 0).
\end{align*}
This equation holds because:
\[\small \rm{P}(\rm{Y}=0) \rm{P}(\rbm{X}|\rm{Y}=0)=\rm{P}(\rbm{X})-P(\rm{Y}=1)\rm{P}(\rbm{X}|\rm{Y}=1).\]
According to this equation, we can now estimate $\pi_n \mathbb{E}_{\rbm{X}| \rm{Y}=0} \ell(f(\bm{x}), 0)$ using only unlabeled data and positive data. Thus, $R_{\ell}$ can be effectively estimated using only unlabeled data and positive data. In summary, we have that $R_{\ell}$ can be unbiasedly estimated by:
\begin{equation} \label{eq:uPU}
\small
\begin{split}
    \hat{R}_{\ell} &= \frac{1}{n_u} \sum_{i=1}^{n_u} \ell(f(\bm{x}_i^u), 0) +\\
    & \frac{\pi_p}{n_p} \sum_{i=1}^{n_p} \left(\ell(f(\bm{x}_i^p), 1) - \ell(f(\bm{x}_i^p), 0)\right),
\end{split}
\end{equation}
where $\bm{x}_i^u$ and $\bm{x}_i^p$ denotes an unlabeled and positive example, respectively, and $n_u$ and $n_p$ denotes the number of unlabeled and positive examples, respectively.

\subsection{Consistent Positive-Unlabeled Learning} \label{sec:consistent_pu}
As we know, a good estimation should be not only unbiased but also consistent. The above induction has proved that $\hat{R}_{\ell}$ is an unbiased estimation of ${R}_{\ell}$. In this section, we show that $\hat{R}_{\ell}$ can be also a consistent estimation of ${R}_{\ell}$ when the loss function $\ell$ is upper bounded. \textit{We argue that this is the first work to give such a proof}, which is summarized in the following theorem:
\begin{theorem}{1} \label{theorem:consistency}
If $\ell$ is bounded by $[0, M]$, then for any $\epsilon > 0$, 
\begin{equation}
\begin{split}
&\rm{P}\{S \in \mathcal{D}|\sup_{f \in \mathcal{H}_R} |{R}_{\ell}-\hat{R}_{\ell}| \leq \epsilon\} \\
&\geq 1 - 2N(\frac{\epsilon}{4(1+2\pi_p)L_M}) e^{-\frac{\min(n_p,  n_u)\epsilon^2}{8(1+2\pi_p)^2B^2}},
\end{split}         
\end{equation}
where $B = L_M M + C_0$.
Here, $L_M$ denotes the Lipschitz constant that $L_M > \frac{\partial \ell(w, y)}{\partial w}, \forall w \in \mathbb{R}$, $C_0 = \max_y \ell(0, y)$, and $\mathcal{H}$ denotes a Reproducing Kernel Hilbert Space (RKHS) \cite{aronszajn1950theory}. $\mathcal{H}_R$ is the hypothesis space for each given $R > 0$ in the ball of radius $R$ in $\mathcal{H}$. $N(\epsilon)$ denotes the covering number of $\mathcal{H}_R$ following Theorem C in \cite{cucker2002mathematical}.
\end{theorem}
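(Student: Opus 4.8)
The plan is to establish a uniform-over-$\mathcal{H}_R$ concentration inequality, following the standard RKHS generalization-bound template (the covering-number estimate of \cite{cucker2002mathematical} already invoked in the statement) but adapted to the two-sample structure of $\hat{R}_{\ell}$. The starting point is the unbiasedness identity derived above, which gives $\mathbb{E}[\hat{R}_{\ell}(f)]=R_{\ell}(f)$ for every fixed $f$; note that $\hat{R}_{\ell}(f)$ is a sum of two empirical averages over two \emph{independent} samples, the $n_u$ unlabeled points and the $n_p$ positive points, so the event $\sup_{f\in\mathcal{H}_R}|R_{\ell}-\hat{R}_{\ell}|\le\epsilon$ is really a uniform deviation-from-the-mean statement.

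Next I would discretize $\mathcal{H}_R$. Both $f\mapsto R_{\ell}(f)$ and $f\mapsto\hat{R}_{\ell}(f)$ are Lipschitz in the sup-norm with constant $(1+2\pi_p)L_M$: since $\ell(\cdot,y)$ is $L_M$-Lipschitz, the unlabeled term contributes a factor $L_M$ and the term $\ell(\cdot,1)-\ell(\cdot,0)$ weighted by $\pi_p$ contributes $2\pi_p L_M$. Taking a minimal $\eta$-net $\{f_1,\dots,f_{N(\eta)}\}$ of $\mathcal{H}_R$ in sup-norm with $\eta=\tfrac{\epsilon}{4(1+2\pi_p)L_M}$, every $f$ has some $f_j$ with $\|f-f_j\|_\infty\le\eta$, and hence $|R_{\ell}(f)-\hat{R}_{\ell}(f)|\le|R_{\ell}(f_j)-\hat{R}_{\ell}(f_j)|+\tfrac{\epsilon}{2}$. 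A union bound then reduces the claim to controlling $\mathrm{P}\{|R_{\ell}(f_j)-\hat{R}_{\ell}(f_j)|>\epsilon/2\}$ for each of the $N(\eta)$ net centers.

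For a fixed center $g$, I would apply a bounded-difference (McDiarmid) inequality to $\hat{R}_{\ell}(g)$ viewed as a function of the $n_u+n_p$ i.i.d.\ coordinates. Replacing one unlabeled coordinate changes $\hat{R}_{\ell}(g)$ by at most $\tfrac{B}{n_u}$ and replacing one positive coordinate by at most $\tfrac{2\pi_p B}{n_p}$, where $B=L_M M+C_0$ bounds the relevant loss values (via $|\ell(w,y)|\le L_M|w|+C_0$ together with the boundedness hypothesis). Hence $\sum_k c_k^2\le (1+2\pi_p)^2B^2\big(\tfrac{1}{n_u}+\tfrac{1}{n_p}\big)\le \tfrac{2(1+2\pi_p)^2B^2}{\min(n_p,n_u)}$, and McDiarmid centered at $R_{\ell}(g)=\mathbb{E}[\hat{R}_{\ell}(g)]$ yields $\mathrm{P}\{|R_{\ell}(g)-\hat{R}_{\ell}(g)|>\epsilon/2\}\le 2\exp\!\big(-\tfrac{\min(n_p,n_u)\epsilon^2}{8(1+2\pi_p)^2B^2}\big)$. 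Multiplying by $N(\eta)$ and passing to the complementary event gives the stated bound.

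The main obstacle — and the reason the rate is governed by $\min(n_p,n_u)$ rather than $n_p+n_u$ — is precisely that $\hat{R}_{\ell}$ is not one i.i.d.\ average but a combination of two, over samples of possibly very different sizes, so one cannot invoke a plain Hoeffding bound and must handle both samples jointly (via McDiarmid, or equivalently a two-stage Hoeffding splitting the $\epsilon/2$ budget between the unlabeled and positive deviations). The remaining work is routine, but one must keep the Lipschitz constant used to fix the net radius and the range bound $B$ used in the concentration step consistent with the constants in the statement, and invoke the sup-norm covering-number estimate $N(\cdot)$ for the RKHS ball from Theorem C of \cite{cucker2002mathematical}.
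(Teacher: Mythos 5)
Your proposal is correct, and it reaches the stated bound by a route that differs from the paper's in its key concentration step. The paper first writes $R_{\ell}(f)-\hat{R}_{\ell}(f)$ as a sum of three one-sample deviations (one over the unlabeled sample for $\ell(\cdot,0)$, two over the positive sample for $\ell(\cdot,1)$ and $\ell(\cdot,0)$, each weighted by $\pi_p$), applies the uniform-deviation lemma of Rosasco et al.\ as a black box to each of the three terms, and then combines the three events and rescales $\epsilon\mapsto\epsilon/(1+2\pi_p)$. You instead build the $\eta$-net once, with $\eta=\epsilon/(4(1+2\pi_p)L_M)$ chosen from the joint Lipschitz constant $(1+2\pi_p)L_M$ of $f\mapsto R_{\ell}(f)$ and $f\mapsto\hat{R}_{\ell}(f)$, and control the full two-sample statistic at each net center in a single McDiarmid step with coordinate increments $B/n_u$ and $2\pi_p B/n_p$; your bound $\sum_k c_k^2\le 2(1+2\pi_p)^2B^2/\min(n_p,n_u)$ is valid and in fact yields an exponent with a $4$ rather than an $8$ in the denominator, so the stated (weaker) inequality follows. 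What each approach buys: the paper's is shorter given the cited lemma, but its final combination step takes a minimum of the three success probabilities where a union bound over the three events (costing an extra constant factor in front of $N$) is what is actually justified; your single-shot McDiarmid-plus-one-covering argument avoids that combination issue entirely and produces exactly the $2N(\cdot)e^{-\cdot}$ form of the claim with only one covering-number factor. One small point to keep consistent with the paper's conventions: the constant $B=L_M M+C_0$ is really meant as a bound on the attained loss values $\ell(f(x),y)$ for $f$ ranging over $\mathcal{H}_R$ (via $|\ell(w,y)|\le L_M|w|+\ell(0,y)$ and a sup-norm bound on $\mathcal{H}_R$), so your parenthetical justification should appeal to the bound on $\|f\|_\infty$ over the ball of radius $R$ rather than to the bound $M$ on $\ell$ itself; this does not affect the validity of your argument.
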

\begin{proof}
Proof appears in Appendix \textbf{A}. 
\end{proof}

\begin{remark}
Let us intuitively think about what if $\ell$ is not upper bounded (e.g., the cross entropy loss function). Suppose that there is a positive example $\bm{x}_i^p$ not occurring in the unlabeled data set. Then, its corresponding risk defined in $\hat{R}_{\ell}$ is $V(\bm{x}_i^p) = \frac{\pi_p}{n_p}(\ell(f(\bm{x}_i^p), 1)-\ell(f(\bm{x}_i^p), 0))$. If $\ell$ is not upper bounded, to achieve a small value of $V(\bm{x}_i^p)$, $f$ can heavily overfit $\bm{x}_i^p$ making $\ell(f(\bm{x}_i^p), 0) \rightarrow +\infty$, and in turn $V(\bm{x}_i^p) \rightarrow -\infty$. From this analysis, we can expect that, \textit{when using a unbounded loss function and a flexible classifier, $\hat{R}_{\ell}$ will dramatically decrease to a far below zero value}. 
\end{remark}

Therefore, in this work, \textit{we force $\ell$ to be bounded by replacing the common unbounded cross entropy loss function with the mean absolute error, resulting in a bounded unbiased positive-unlabeled learning (buPU) algorithm}. This slightly differs from the setting of uPU, which only requires $\ell$ to be symmetric.

We further combine buPU with the non-negative constraint proposed by \citet{kiryo2017positive}, which has proved to be effectiveness in alleviating overfitting, obtaining a bounded non-negative positive-unlabeled learning (bnPU) algorithm:
\begin{equation}\label{eq:nnpu}
\small
\begin{split}
\hat{R}_{\ell} &= \frac{\pi_p}{n_p} \sum_{i=1}^{n_p} \ell(f(\bm{x}_i^p), 1) + \\
& \max\left(0, \frac{1}{n_u} \sum_{i=1}^{n_u} \ell(f(\bm{x}_i^u), 0) -  \frac{\pi_p}{n_p} \sum_{i=1}^{n_p} \ell(f(\bm{x}_i^p), 0)\right).
\end{split}
\end{equation}

\section{Dictionary-based NER with PU Learning}

In the following, we first define some notations used throughout this work, and illustrate the label assignment mechanism used in our method. Then, we precisely illustrate the data labeling process using the dictionary. After that, we show the detail for building the PU classifier, including word representation, loss definition, and label inference. Finally, we show the adapted method for enriching the dictionary.

\subsection{Notations}
We denote $\rm{W} \in \mathcal{V}$ and $\rm{S} = \{\rm{W}\} \in \mathcal{S}$ be the word-level and sentence-level input random variables, where $\mathcal{V}$ is the word vocabulary and $\mathcal{S}$ is the sentence space. $\rm{D}_e$ denotes the entity dictionary for a given entity type and $\mathcal{D}=\{{s}_1, \cdots, {s}_N\} \subseteq \mathcal{S}$ denotes the unlabelled dataset.
We denote $\mathcal{D}^+$ the set of entity words labeled by $\rm{D}_e$, and denote $\mathcal{D}^u$ the rest unlabeled words. 

\subsection{Label Assignment Mechanism}
In this work, we apply the binary label assignment mechanism for the NER task instead of the prevalent BIO or BIOES mechanism. Entity words are mapped to the positive class and non-entity words are mapped to the negative class. This is because, as we have discussed in the \textsection \ref{sec:introduction}, the dictionary cannot guarantee to cover all entity words within a sentence. It may only label the beginning (B), the internal (I), or the last (E) word of an entity. Therefore, we cannot distinguish which type, B, I, or E, the labeled entity word belongs to. Take the data labeling results depicted in Figure \ref{fig:label_thumbnail} as an example. With the dictionary, we know that ``Joe" is an entity word. However we cannot know that it is the beginning of the person name ``Joe Frazier".

\subsection{Data Labeling using the Dictionary}

To obtain $\mathcal{D}^+$, we use the maximum matching algorithm \cite{liu1994word,xue2003chinese} to perform data labeling with $\rm{D}_e$. It is a greedy search routine that walks through a sentence trying to find the longest string, starting from a given point in the sentence, that matches with an entry in the dictionary. The general process of this algorithm is summarized in Alg. \ref{alg:entity_label}. 
In our experiments, we intuitively set the context size $k=4$. 

\begin{algorithm}[t]
\caption{Data Labeing using the Dictionary}\label{alg:entity_label}
\begin{algorithmic}[1]
\BState \textbf{Input:} named entity dictionary $\rm{D}_e$, a sentence ${s}= \{{w}_1, \cdots, {w}_n\}$, and the context size $k$
\BState \textbf{Result:} partial labeled sentence
\BState \textbf{Initialize:} $i \gets 1$
\While {$i\leq n$}
	\For{$j \in [k, \cdots, 0]$} 
    	\If{$\{{w}_i, \cdots, {w}_{\max(i+j, n)}\} \in \rm{D}_e$} 
        	\State label $\{{w}_i, \cdots, {w}_{\max(i+j, n)}\}$ as positive class.
        	\State $i \gets i+j+1$
        	\State break
       	\EndIf
        \If{$j==0$}
        \State $i \gets i+1$
        \EndIf
    \EndFor
\EndWhile
\end{algorithmic}
\end{algorithm}

\subsection{Build PU Learning Classifier}
In this work, we use a neural-network-based architecture to implement the classifier $f$, and this architecture is shared by different entity types.

\paragraph{Word Representation.}
Context-independent word representation consists of three part of features, i.e., the character sequence representation $\bm{e}_c({w})$, the word embedding $\bm{e}_w({w})$, and some human designed features on the word-face $\bm{e}_h({w})$. 

For the character-level representation $\bm{e}_c({w})$ of ${w}$, we use the \textit{one-layer convolution network} model \cite{kim2014convolutional} on its character sequence $\{c_1, c_2, \cdots, c_m\} \in \mathcal{V}_c$, where $\mathcal{V}_c$ is the character vocabulary. Each character $c$ is represented using
\begin{equation*}
   \bm{v}(c) = \rbm{W}_c(c),
\end{equation*}
where $\rbm{W}_c$ denotes a character embedding lookup table. 
The \textit{one-layer convolution network} is then applied to $\{\bm{v}(c_1), \bm{v}(c_2), \cdots, \bm{v}(c_m)\}$ to obtain $\bm{e}_c({w})$.

For the word-level representation $\bm{e}_w({w})$ of ${w}$, we introduce an unique dense vector for $w$, which is initialized with Stanford's GloVe word embeddings\footnote{ http://nlp.stanford.edu/projects/glove/} \cite{pennington2014glove} and fine-tuned during model training. 

For the human designed features $\bm{e}_h({w})$ of ${w}$, we introduce a set of binary feature indicators. These indicators are designed on options proposed by \citet{collobert2011natural}: \textit{allCaps}, \textit{upperInitial}, \textit{lowercase}, \textit{mixedCaps}, \textit{noinfo}. If any feature is activated, its corresponding indicator is set to 1, otherwise 0. This way, it can keep the capitalization information erased during lookup of the word embedding.  

The final word presentation independent to its context $\bm{e}({w}) \in R^{k_w}$ of ${w}$, is obtained by concatenating these three part of features:
\begin{equation}
\bm{e}({w}) = [\bm{e}_c({w}) \oplus\bm{e}_w({w}) \oplus \bm{e}_h({w})],
\end{equation}
where $\oplus$ denotes the concatenation operation. Based on this representation, we apply a bidirectional LSTM (BiLSTM) network \cite{huang2015bidirectional}, taking $\bm{e}({w}_t), {w}_t \in {s}$ as step input, to model context information of ${w}_t$ given the sentence $s$. Hidden states of the forward and backward LSTMs at the $t$ step are concatenated:
\begin{equation}
\bm{e}({{w}_t|{s}})=[\overrightarrow{\rbm{h}}_t \oplus \overleftarrow{\rbm{h}}_t],
\end{equation}
to form the representation of ${w}_t$ given ${s}$.

\paragraph{Loss Definition.}
Given the word representation, $\bm{e}({w}|{s})$, of ${w}$ conditional on ${s}$, its probability to be predicted as positive class is modeled by:
\begin{equation}
f({w}|{s}) = \sigma (\bm{w}_p^T \bm{e}({w}|{s})+b),
\end{equation}
where $\sigma$ denotes the sigmoid function, $\bm{w}_p$ is a trainable parameter vector and $b$ is the bias term. The prediction risk on this word given label $y$ is defined by:
\begin{equation}
\begin{split}
\ell(f({w}|{s}), y) = |y-f({w}|{s})|.
\end{split}
\end{equation} 
Note that $\ell(f({w}|{s}), y) \in [0, 1)$ is upper bounded. The empirical training loss is defined by:
\begin{equation} \label{eq:est_pi}
\resizebox{\columnwidth}{!}{$
\hat{R}_{\ell}(f) = \pi_p \hat{R}_p^+(f) + \max \left\{0, \hat{R}_u^-(f) - {\pi}_p \hat{R}_p^-(f)\right\},$}
\end{equation}
where \small
\begin{align*}
    \hat{R}_p^+(f) & = \tiny{\frac{1}{|\mathcal{D}^+|} \sum_{{w}|{s} \in \mathcal{D}^+} \ell(f({w}|{s}), 1)}, \\
    \hat{R}_p^-(f) &= 1-\hat{R}_p^+(f), \\
    \hat{R}_u^-(f) &= \frac{1}{|\mathcal{D}^u|} \sum_{{w}|{s} \in \mathcal{D}^u } \ell(f({w}|{s}), 0),
\end{align*}
\normalsize
and $\pi_p$ is the ratio of entity words within $\mathcal{D}^u$.

In addition, during our experiments, we find out that due to the class imbalance problem ($\pi_p$ is very small), $f$ inclines to predict all instances as the negative class, achieving a high value of accuracy while a small value of F1 on the positive class. This is unacceptable for NER. Therefore, we introduce a class weight $\gamma$ for the positive class and accordingly redefine the training loss as:
\begin{equation}
\resizebox{\columnwidth}{!}{$
\hat{R}_{\ell}(f) = \gamma \cdot \pi_p \hat{R}_p^+(f) + \max \left\{0, \hat{R}_u^-(f) - \pi_p \hat{R}_p^-(f)\right\}.$}
\end{equation}

\paragraph{Label Inference.}
Once the PU classifier has been trained, we use it to perform label prediction.
However, since we build a distinct classifier for each entity type, a word may be predicted as positive class by multiple classifiers. To address the conflict, we choose the type with the highest prediction probability (evaluated by $f(w|s)$). Predictions of classifiers of the other types are reset to 0.

At inference time, we first solve the type conflict using the above method. After that, consecutive words being predicted as positive class by the classifier of the same type are treated as an entity. Specifically, for sequence ${s}=\{{w}_1, {w}_2, {w}_3, {w}_4, {w}_5\}$, if its predicted labels by the classifier of a given type are $L=\{1, 1, 0, 0, 1\}$, then we treat $\{{w}_1, {w}_2\}$ and $\{{w}_5\}$ as two entities of the type. 

\subsection{Adapted PU Learning for NER}

In PU learning, we use the empirical risk on labeled positive data,  $\frac{1}{n_p} \sum_{i=1}^{n_p} \ell(f({x}_i^p), 1)$, to estimate the expectation risk of positive data. This requires that the positive examples ${x}_i^p$ draw identically independent from the distribution $\rm{P}(\rm{X}|\rm{Y}=1)$. The requirement is usually hard to satisfy, using a simple dictionary to perform data labeling. 

To alleviate this problem, we propose an adapted method, motivated by the {AdaSampling} \cite{yang2017positive} algorithm. The key idea of the proposed method is to adaptively enrich the named entity dictionary. Specifically, we first train a PU learning classifier $f$ and use it to label the unlabeled dataset. Based on the predicted label, it extracts all of the predicted entities. For a predicted entity, if it occurs over $k$ times and all of its occurrences within the unlabeled dataset are predicted as entities, we will add it into the entity dictionary in the next iteration. This process iterates several times until the dictionary does not change.

\section{Experiments} 
In this section, we empirically study:
\begin{itemize}
\item \textit{the general performance of our proposed method using simple dictionaries;}
\item \textit{the influence of the unlabeled data size;}
\item \textit{the influence of dictionary quality, such as size, data labeling precision and recall;}
\item \textit{and the influence of the estimation of $\pi_p$.}
\end{itemize}

\subsection{Compared Methods}

There are five indispensable baselines with which our proposed Adapted PU learning (\textbf{AdaPU}) algorithm should compare. The first one is the dictionary matching method, which we call \textbf{Matching}. It directly uses the constructed named entity dictionary to label the testing set as illustrated in Alg. \ref{alg:entity_label}. The second one is the supervised method that uses the same architecture as $f$ but trains on fine-grained annotations (fully labeled $\mathcal{D}^u$ and $\mathcal{D}^+$). In addition, it applies the BIOES label assignment mechanism for model training. We call this baseline \textbf{BiLSTM}. The third one is the \textbf{uPU} algorithm, which uses cross entropy loss to implement $\ell$. The fourth one is the bounded uPU (\textbf{buPU}) algorithm, which implement $ell$ with mean absolute error. Compared with AdaPU, it does not apply the non-negative constraint and does not perform dictionary adaptation. The last one is the bounded non-negative PU learning (\textbf{bnPU}) algorithm, which does not perform dictionary adaptation compared with AdaPU. 

Additionally, we compared our method with several representative supervised methods that have achieved state-of-the-art performance on NER. These methods include:
\textbf{Stanford NER (MEMM)} \cite{mccallum2000maximum} a maximum-entropy-markov-model-based method;
\textbf{Stanford NER (CRF)} \cite{finkel2005incorporating} a conditional-random-field-based method;
and \textbf{BiLSTM+CRF} \cite{huang2015bidirectional} a neural-network-based method as the BiLSTM baseline, but additionally introducing a CRF layer.

\begin{table}[t!]
\centering
\renewcommand{\arraystretch}{0.7}
\resizebox{\columnwidth}{!}
{\begin{tabular}{llrrr} 
\toprule
Dataset& Type & \# of l.w. & Precision & Recall \\
	\midrule
	\multirow{4}{*}{CoNLL (en)} &PER  &2,507 & 89.26 & 17.38\\ 
	&LOC &4,384 & 85.07 & 50.03\\ 
    &ORG &3,198 & 86.17 & 29.45\\ 
    &MISC &1,464 & 92.13 &30.59\\ \midrule
    \multirow{4}{*}{CoNLL (sp)} &PER  &574 &90.24 &37.84\\ 
	&LOC &272 &84.93 &16.39 \\ 
    &ORG &702 &96.87 &27.19 \\ 
    &MISC &157 &68.15 &11.94 \\ \midrule
	\multirow{3}{*}{MUC} &PER  &788 & 74.56 & 28.50\\ 
	&LOC  &511 & 89.43 &43.33\\ 
    &ORG &1,257 &97.74 &30.38\\
    \midrule
	\multirow{3}{*}{Twitter} &PER &1,842 & 79.26 &26.03\\ 
	&LOC &1,109 &90.96  &34.15\\ 
    &ORG &398 &83.77 &20.58\\
\bottomrule
\end{tabular}}
\caption{Data labeling results using the dictionary: the number of labeled words (\# of l.w.), the word-level precision ($\frac{\text{\# of true labeled words}}{\text{\# of total labeled words}}$) and recall.}
\label{table:dict-info}
\end{table}

\subsection{Datasets} 
\paragraph{CoNLL (en).} CoNLL2003 NER Shared Task Dataset in English \cite{tjong2003introduction} collected from Reuters News. It is annotated by four types: PER, LOC, ORG, and MISC. We used the official split training set for model training, and testb for testing in our experiments, which contains 203K and 46K tokens, respectively. In addition, there are about 456k additional unlabeled tokens. 

\paragraph{CoNLL (sp).} CoNLL2002 Spanish NER Shared Task Dataset \cite{Sang2002Introduction} collected from Spanish EFE News Agency. It is also annotated by PER, LOC, ORG, and MISC types. The training and test data sets contain 273k and 53k lines, respectively. 

\paragraph{MUC.} Message Understanding Conference 7 released by \citet{chinchor1998overview} for NER. It has about 190K tokens in the training set and 64K tokens in the testing set. For the sake of homogeneity, we perform entity detection on PER, LOC, and ORG in this study.

\paragraph{Twitter.} Twitter is a dataset collected from Twitter and released by \citet{zhang2018adaptive}. It contains 4,000 tweets for training and 3,257 tweets for testing. Every tweet contains both textual information and visual information. In this work, we only used the textual information to perform NER and we also only performed entity detection on PER, LOC, and ORG.

For the proposed method and the PU-learning-based baselines, we used the training set of each dataset as $\mathcal{D}$. Note that we did not use label information of each training set for training these models.

\begin{table}[t!]
\centering
\resizebox{\columnwidth}{!}{\begin{tabular}{lcccc}
\toprule
Dataset& PER& LOC& ORG& MISC\\
\midrule
   	CoNLL (en)	& .055/.053 & .041/.038& .049/.045& .023/.020 \\
    CoNLL (sp)	& .019/.018 & .019/.017& .030/.027& $----$\\
	MUC-7\,	& .022/.019	& .025/.023& .037/.034& $----$
    \\
    Twitter & .058/.055	& .046/.044 &.021/.018 & $----$ \\
\bottomrule
\end{tabular}}
\caption{True/Estimated value of $\pi_p$.}
\label{table:data_prior}
\vspace*{-\baselineskip}
\end{table}

\begin{table*}[t!] 
\renewcommand{\arraystretch}{0.9}
\centering
\resizebox{\textwidth}{!}{\begin{tabular}{ll|cccc|ccccr}
\hline
Dataset                 		&Type     	& MEMM 	& CRF 	& BiLSTM 	& BiLSTM+CRF 	& Matching 	& uPU 	& buPU 	& bnPU 	& AdaPU \\ \hline
\multirow{5}{*}{CoNLL (en)}  	& PER 		& 91.61 & 93.12 & 94.21 	& 95.71  	   	& 6.70 		& 74.22	& 85.01	& 87.21 	& 90.17  \\
								& LOC  		& 89.72 & 91.15 & 91.76 	& 93.02 	   	& 67.16  	& 69.88	& 81.27	& 83.37 	& 85.62  \\
								& ORG 		& 80.60 & 81.91 & 83.21 	& 88.45 	   	& 46.65 	& 73.64	& 74.72	& 75.29 	& 76.03  \\
								& MISC 		& 77.45 & 79.35 & 76.00 	& 79.86 	   	& 53.98  	& 68.90	& 68.90	& 66.88 	& 69.30  \\
								& Overall	& 86.13 & 87.94 & 88.30 	& 90.01			& 44.90		& 72.32	& 79.20	& 80.74 	& 82.94  \\
\hline
\multirow{5}{*}{CoNLL (sp)}  	& PER 		& 86.18 & 86.77 & 88.93 	& 90.41   		& 32.40 	& 82.28 & 83.76	& 84.30 	& 85.10   \\
								& LOC  	 	& 78.48 & 80.30 & 75.43  	& 80.55 	 	& 28.53  	& 70.44	& 72.55	& 73.68 	& 75.23  \\
								& ORG 		& 79.23 & 80.83 & 79.27 	& 83.26 		& 55.76 	& 69.82	& 71.22	& 69.82 	& 72.28 \\
								& Overall	& 81.14 & 82.63 & 80.28 	& 84.74  		& 42.23 	& 73.84	& 74.50	& 74.43		& 75.85  \\
\hline                                  	                                                        	    		        		
\multirow{4}{*}{MUC}  	 	 	& PER 	 	& 86.32 & 87.50 & 85.71  	& 84.55  	 	& 27.84 	& 77.98	& 84.94	& 84.21 	& 85.26 \\
								& LOC 		& 81.70 & 83.83 & 79.48 	& 83.43 		& 62.82 	& 64.56	& 72.62	& 75.61 	& 77.35 \\
								& ORG 		& 68.48 & 72.33 & 66.17 	& 67.66 		& 51.60 	& 45.30	& 58.39	& 58.75 	& 60.15 \\  
								& Overall	& 74.66 & 76.47 & 73.12 	& 75.08 		& 50.12		& 63.87	& 69.89	& 70.06 	& 71.60  \\ \hline
\multirow{4}{*}{Twitter}  	 	& PER 		& 73.85 & 80.86 & 80.61  	& 80.77  	 	& 41.33  	& 67.30	& 72.72	& 72.68 	& 74.66 \\
								& LOC 		& 69.35 & 75.39 & 73.52 	& 72.56 		& 49.74 	& 59.28	& 61.41	& 63.44 	& 65.18 \\
								& ORG 		& 41.81 & 47.77 & 41.39 	& 41.33 		& 32.38 	& 31.51	& 36.78	& 35.77 	& 36.62 \\
								& Overall 	& 61.48	& 67.15 & 65.60 	& 65.32 		& 37.90		& 53.63 & 57.16	& 57.54 	& 59.36 \\
\hline
\end{tabular}}
\caption{Model performance by F1 on the testing set of each dataset. The first group of models are all fully-supervised, which use manual fine-grained annotations. while the second group of models use only named entity dictionaries to perform the NER task.}
\label{table:overall_result}
\end{table*}

\begin{figure*}[t!]
\centering
\subfloat[PER]      {
\includegraphics[width=.32\linewidth]{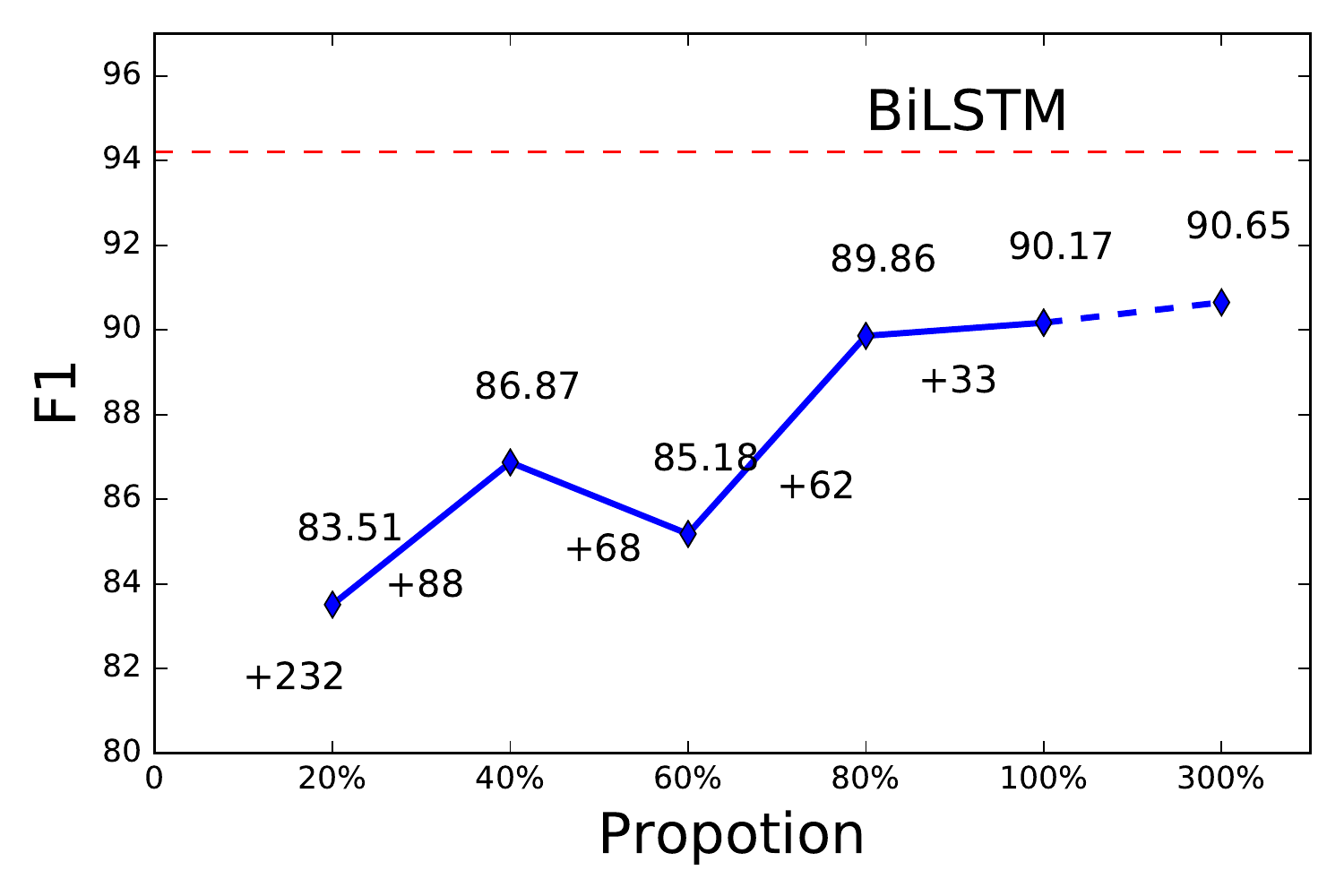}
}
\subfloat[LOC]      {
\includegraphics[width=.32\linewidth]{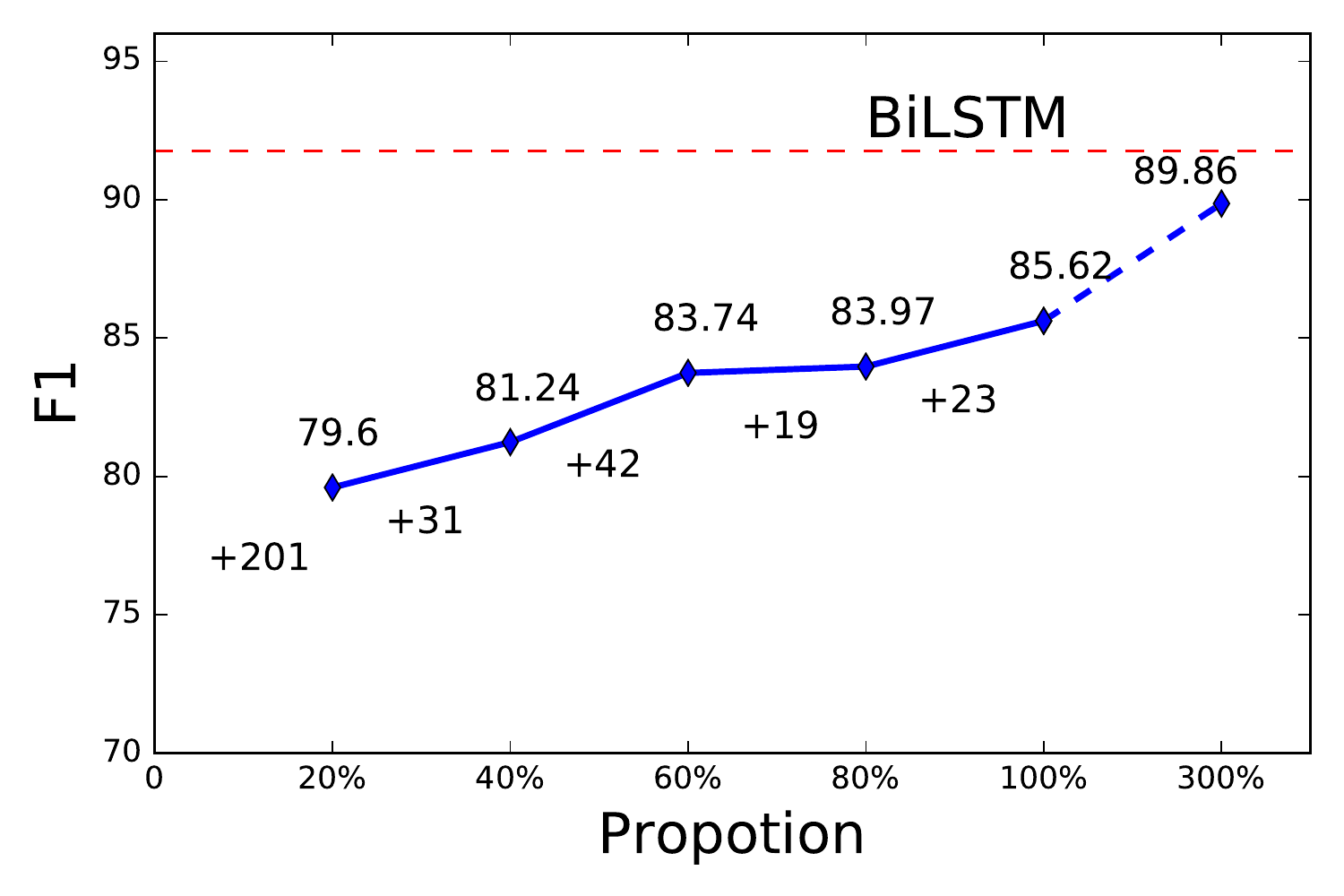}
}
\subfloat[ORG]      {
\includegraphics[width=.32\linewidth]{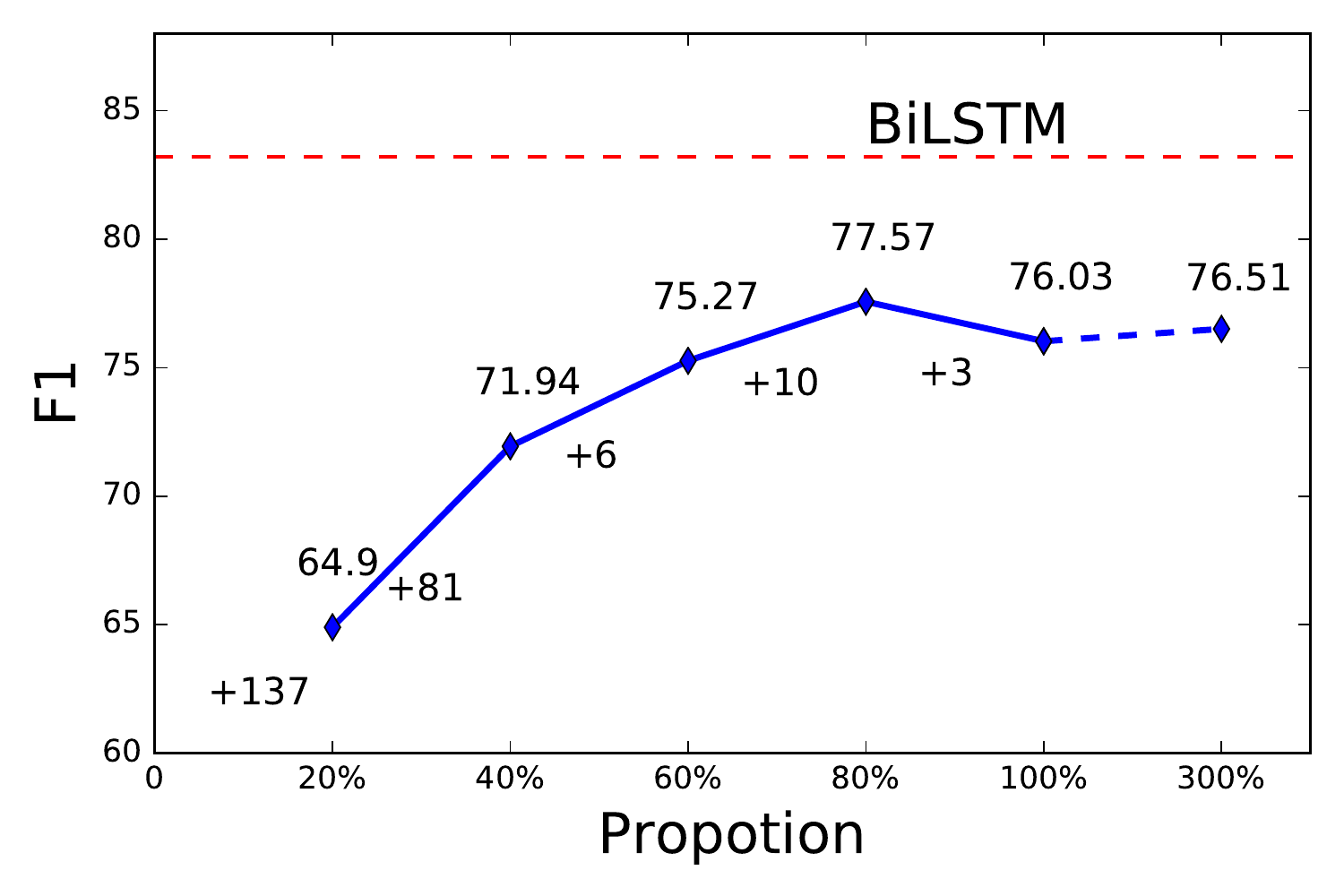}
}
\caption{F1 of AdaPU on the testing set of CoNLL (en) using different portion of the training data set for model training. The red dot line denotes performance of BiLSTM. '+$k$' means that it labels $k$ more unique words on the additional 20\% (e.g., 40\%-20\%) of training data. 
}
\label{fig:data_propotion}
\end{figure*}

\subsection{Build Named Entity Dictionary}

For CoNLL (en), MUC, and Twitter datasets, we collected the first 2,000 popular English names in England and Wales in 2015 from ONS\footnote{http://www.ons.gov.uk/ons/index.html} to construct the PER dictionary. 
For LOC, we collected names of countries and their top two popular cities\footnote{https://en.wikipedia.org/wiki/List\_of\_countries\_by national\_capital\_largest\_and\_second-largest\_cities} to construct the dictionary. While for MISC, we turned country names into the adjective forms, for example, England $\rightarrow$ English, and China $\rightarrow$ Chinese, and used the resultant forms to construct the dictionary.
For ORG, we collected names of popular organizations and their corresponding abbreviations from Wikipedia \footnote{https://en.wikipedia.org/wiki/List\_of\_intergovernmental\_ organizations} to construct the dictionary. We also added names of some international companies\footnote{https://en.wikipedia.org/wiki/List\_of\_multinational\_corp-orations}, such as Microsoft, Google, and Facebook, into the dictionary. In addition, we added some common words occurring in organization names such as ``Conference", ``Cooperation", ``Commission", and so on, into the dictionary.

For CoNLL (sp), we used DBpedia query editor\footnote{http://dbpedia.org} to select the most common 2000 names of the people who was born in Spain to construct the PER dictionary. We further used Google translator to translate the English LOC, ORG, MISC dictionary into Spanish.

The resultant named entity dictionaries contain 2,000 person names, 748 location names, 353 organization names, and 104 MISC entities. Table \ref{table:dict-info} lists some statistic information of the data labeling results with these dictionaries using Alg. \ref{alg:entity_label}. From the table, we can see that the precision of the data labeling is acceptable but the recall is quite poor. This is expectable and is a typical problem of the method using only dictionaries to perform NER.

\subsection{Estimate $\pi_p$}
Before disscussing the estimation of $\pi_p$ defined in Eq. (\ref{eq:est_pi}), let us first look at some statistic information of the four studied datasets. Table \ref{table:data_prior} lists the true value of $\pi_p=(\text{\# of entity words}) / (\text{\# of words of the training set})$ for different entity types over dataset. From the table, we can see that the variation of $\pi_p$ cross different datasets is quite small. This motivates us to use the value of $\pi_p$ obtained from an existing labeled dataset as an initialization. The labeled dataset may be from other domains or be out-of-date. In this work, we initially set $\pi_p=0.04, 0.04, 0.05, 0.03$ for PER, LOC, ORG, and MISC, respectively. Starting from this value, we trained the proposed model and used it to perform prediction on the unlabeled dataset. Based on the predicted results, we re-estimate the value of $\pi_p$. The resulted values are listed in table \ref{table:data_prior} and were used throughout our experiments without further illustration.


\subsection{Results} 

Following the protocol of most previous works, we apply the entity-level (exact entity match) F1 to evaluate model performance. 

\paragraph{General Performance.} Table \ref{table:overall_result} shows model performance by entity type and the overall performance on the four tested datasets.
From the table, we can observe: \textbf{1)} The performance of the Matching model is quite poor compared to other models. We found out that it mainly resulted from low recall values. This accords with our discussion in \textsection \ref{sec:introduction} and shows its inapplicability using such simple dictionaries. \textbf{2)} Those PU-learning-based methods achieve significant improvement over Matching on all datasets. This demonstrates the effectiveness of the PU learning framework for NER in the studied setting. \textbf{3)} buPU greatly outperforms uPU. This verifies our analysis in \textsection  \ref{sec:consistent_pu} about the necessity to make $\ell$ upper bounded. \textbf{4)} bnPU slightly outperforms buPU on most of datasets and entity types. This verifies the effectiveness of the non-negative constraint proposed by \citet{kiryo2017positive}. \textbf{5)} The proposed AdaPU model achieves further improvement over bnPU, and it even achieves comparable results with some supervised methods, especially for the PER type. This verifies the effectiveness of our proposed method for enriching the named entity dictionaries.  

\begin{table}[h]
\renewcommand{\arraystretch}{0.8}
\centering
\resizebox{\columnwidth}{!}
{
\begin{tabular}{llccc}
\toprule
Type & \multicolumn{1}{c}{Size} & Precision & Recall  \\ \midrule
PER &10,159 (2,000) &89.65 (89.26)  &19.08 (17.38) \\
LOC &10,106 (748) &71.77 (85.07)  & 56.42 (50.03)\\ 
ORG &10,039 (353) &83.42 (86.17) &28.59 (29.45) \\ 
\bottomrule
\end{tabular} }
\caption{Statistic information of the extended dictionary v.s. (that of the original dictionary).}
\label{table:dict-info2}
\end{table}

\begin{table}[h]
\centering
\resizebox{\columnwidth}{!}{\begin{tabular}{lcccc}
\toprule
Model & PER & LOC & ORG & Overall\\ \midrule
Matching & 9.10 (6.70) & 69.85 (67.16) &45.52 (46.65) & 41.40 (39.39)\\ 
AdaPU &91.14 (90.17) &77.60 (85.62) &76.67 (76.03) & 81.87 (82.94) \\
\bottomrule
\end{tabular}}
\caption{F1 of the proposed method using the extend dictionary v.s. (that using the original dictionary) on CoNLL (en) testing set.}
\label{table:big_dic_result}
\end{table}

\paragraph{Influence of Unlabeled Data Size.} We further study the influence of the unlabeled data size to our proposed method. To perform the study, we used 20\%, 40\%, 60\%, 80\%, 100\%, and 300\% (using additional unlabeled data) of the training data set of CoNLL (en) to train AdaPU, respectively. Figure \ref{fig:data_propotion} depicts the results of this study on PER, LOC, and ORG. From the figure, we can see that increasing the size of training data will, in general, improve the performance of AdaPU, but the improvements are diminishing. Our explanation of this phenomenon is that when the data size exceeds a threshold, the number of unique patterns becomes an sublinear function of the data size. This was verified by the observation from the figure, for example, on PER, it labeled 232 unique words on 20\% of training data, while it only labeled 88 more unique words after introducing additional 20\% of training data.

\paragraph{Influence of Dictionary.} We then study the influence of the dictionary on our proposed model. To this end, we extended the dictionary with DBpedia using the same protocol proposed by \citet{chiu2016named}. Statistic information of the resultant dictionary is listed in table \ref{table:dict-info2}, and model performance using this dictionary is listed in table \ref{table:big_dic_result}. A noteworthy observation of the results is that, on LOC, the performance should decrease a lot when using the extended dictionary. We turn to table \ref{table:dict-info2} for the explanation. We can see from the table that, on LOC, the data labeling precision dropped about 13 points (85.07 $\rightarrow$ 71.77) using the extend dictionary. This means that it introduced more false-positive examples into the PU learning and made the empirical risk estimation bias more to the expectation when using the extended dictionary. 

\begin{table}[t!]
\centering
\resizebox{\columnwidth}{!}
{
\begin{tabular}{lccccc}
\toprule
$\pi_p$ & PER & LOC & ORG & MISC  & Overall \\ \midrule
True & 90.21  & 85.06  & 77.17 &69.85  & 83.13 \\ 
Estimated & 90.17 & 85.62  & 76.03 &69.30 & 82.94\\ 
\bottomrule
\end{tabular}}
\caption{F1 of the proposed method on CoNLL (en) when using True/Estimated value of $\pi_p$.}
\label{table:pi_value}
\end{table}

\paragraph{Influence of $\pi_p$ Value.} Table \ref{table:pi_value} lists the performance of AdaPU when using the true or estimated value of $\pi_p$ as listed in table \ref{table:data_prior}. From the table, we can see that the proposed model using the estimated $\pi_p$ only slightly underperforms that using the true value of $\pi_p$. 
This shows the robustness of the proposed model to a small variation of $\pi_p$ and verifies the effectiveness of the $\pi_p$ estimation method.

\section{Related Work}


Positive-unlabeled (PU) learning \cite{li2005learning} aims to train a classifier using only labeled positive examples and a set of unlabeled data, which contains both positive and negative examples. Recently, PU learning has been used in many applications, e.g., text classification \cite{li2003learning}, matrix completion \cite{hsieh2015pu}, and sequential data \cite{nguyen2011positive}. The main difference between PU learning and semi-supervised learning is that, in semi-supervised learning, there is labeled data from all classes, while in PU learning, labeled data only contains examples of a single class .



AdaSampling \cite{yang2017positive} is a self-training-based approach designed for PU learning, which utilizes predictions of the model to iteratively update training data. 
Generally speaking, it initially treats all unlabeled instances as negative examples.
Then, based on the model trained in the last iteration, it generates the probability $p(y=0|\bm{x}_i^u)$ of an unlabeled example $\bm{x}_i^u$ to be a negative one. This value, in turn, determines the probability of $\bm{x}_i^u$ to be selected as the negative examples for model training in next iteration. This process iterates for an acceptable result.

\section{Conclusion}

In this work, we introduce a novel PU learning algorithm to perform the NER task using only unlabeled data and named entity dictionaries. We prove that this algorithm can unbiasedly and consistently estimate the task loss as if there is fully labeled data. And we argue that it can greatly reduce the requirement on sizes of the dictionaries. Extensive experimental studies on four NER datasets validate its effectiveness. 

\section*{Acknowledgements}
The authors wish to thank the anonymous reviewers for their helpful comments. This work was partially funded by China National Key R\&D Program (No. 2018YFB1005104, 2018YFC0831105, 2017YFB1002104, ), National Natural Science Foundation of China (No.  61751201, 61532011), Shanghai Municipal Science and Technology Major Project (No.2018SHZDZX01), STCSM (No.16JC1420401,17JC1420200), ZJLab.

\bibliographystyle{acl_natbib}
\bibliography{PU_learning_for_NER}

\begin{thebibliography}{40}
\expandafter\ifx\csname natexlab\endcsname\relax\def\natexlab#1{#1}\fi

\bibitem[{Aronszajn(1950)}]{aronszajn1950theory}
Nachman Aronszajn. 1950.
\newblock Theory of reproducing kernels.
\newblock \emph{Transactions of the American mathematical society},
  68(3):337--404.

\bibitem[{Babych and Hartley(2003)}]{babych2003improving}
Bogdan Babych and Anthony Hartley. 2003.
\newblock Improving machine translation quality with automatic named entity
  recognition.
\newblock In \emph{Proceedings of the 7th International EAMT workshop on MT and
  other Language Technology Tools}, pages 1--8. Association for Computational
  Linguistics.

\bibitem[{Bordes et~al.(2015)Bordes, Usunier, Chopra, and
  Weston}]{bordes2015large}
Antoine Bordes, Nicolas Usunier, Sumit Chopra, and Jason Weston. 2015.
\newblock Large-scale simple question answering with memory networks.
\newblock \emph{arXiv preprint arXiv:1506.02075}.

\bibitem[{Chinchor(1998)}]{chinchor1998overview}
Nancy Chinchor. 1998.
\newblock Overview of muc-7.
\newblock In \emph{Seventh Message Understanding Conference (MUC-7):
  Proceedings of a Conference Held in Fairfax, Virginia, April 29-May 1, 1998}.

\bibitem[{Chiu and Nichols(2016)}]{chiu2016named}
Jason Chiu and Eric Nichols. 2016.
\newblock Named entity recognition with bidirectional lstm-cnns.
\newblock \emph{Transactions of the Association of Computational Linguistics},
  4(1):357--370.

\bibitem[{Collobert et~al.(2011)Collobert, Weston, Bottou, Karlen, Kavukcuoglu,
  and Kuksa}]{collobert2011natural}
Ronan Collobert, Jason Weston, L{\'e}on Bottou, Michael Karlen, Koray
  Kavukcuoglu, and Pavel Kuksa. 2011.
\newblock Natural language processing (almost) from scratch.
\newblock \emph{Journal of Machine Learning Research}, 12(Aug):2493--2537.

\bibitem[{Cucker and Smale(2002)}]{cucker2002mathematical}
Felipe Cucker and Steve Smale. 2002.
\newblock On the mathematical foundations of learning.
\newblock \emph{Bulletin of the American mathematical society}, 39(1):1--49.

\bibitem[{Del{\.e}ger et~al.(2016)Del{\.e}ger, Bossy, Chaix, Ba, Ferr{\.e},
  Bessieres, and N{\.e}dellec}]{deleger2016overview}
Louise Del{\.e}ger, Robert Bossy, Estelle Chaix, Mouhamadou Ba, Arnaud
  Ferr{\.e}, Philippe Bessieres, and Claire N{\.e}dellec. 2016.
\newblock Overview of the bacteria biotope task at bionlp shared task 2016.
\newblock In \emph{Proceedings of the 4th BioNLP Shared Task Workshop}, pages
  12--22.

\bibitem[{Finkel et~al.(2005)Finkel, Grenager, and
  Manning}]{finkel2005incorporating}
Jenny~Rose Finkel, Trond Grenager, and Christopher Manning. 2005.
\newblock Incorporating non-local information into information extraction
  systems by gibbs sampling.
\newblock In \emph{Proceedings of the 43rd annual meeting on association for
  computational linguistics}, pages 363--370. Association for Computational
  Linguistics.

\bibitem[{Gerner et~al.(2010)Gerner, Nenadic, and Bergman}]{gerner2010linnaeus}
Martin Gerner, Goran Nenadic, and Casey~M Bergman. 2010.
\newblock Linnaeus: a species name identification system for biomedical
  literature.
\newblock \emph{BMC bioinformatics}, 11(1):85.

\bibitem[{Gridach(2017)}]{gridach2017character}
Mourad Gridach. 2017.
\newblock Character-level neural network for biomedical named entity
  recognition.
\newblock \emph{Journal of biomedical informatics}, 70:85--91.

\bibitem[{Hsieh et~al.(2015)Hsieh, Natarajan, and Dhillon}]{hsieh2015pu}
Cho-Jui Hsieh, Nagarajan Natarajan, and Inderjit~S Dhillon. 2015.
\newblock Pu learning for matrix completion.
\newblock In \emph{ICML}, pages 2445--2453.

\bibitem[{Huang et~al.(2015)Huang, Xu, and Yu}]{huang2015bidirectional}
Zhiheng Huang, Wei Xu, and Kai Yu. 2015.
\newblock Bidirectional lstm-crf models for sequence tagging.
\newblock \emph{arXiv preprint arXiv:1508.01991}.

\bibitem[{Kim(2014)}]{kim2014convolutional}
Yoon Kim. 2014.
\newblock Convolutional neural networks for sentence classification.
\newblock \emph{Empirical Methods in Natural Language Processing}.

\bibitem[{Kiryo et~al.(2017)Kiryo, Niu, du~Plessis, and
  Sugiyama}]{kiryo2017positive}
Ryuichi Kiryo, Gang Niu, Marthinus~C du~Plessis, and Masashi Sugiyama. 2017.
\newblock Positive-unlabeled learning with non-negative risk estimator.
\newblock In \emph{Advances in Neural Information Processing Systems}, pages
  1675--1685.

\bibitem[{Lample et~al.(2016)Lample, Ballesteros, Subramanian, Kawakami, and
  Dyer}]{lample2016neural}
Guillaume Lample, Miguel Ballesteros, Sandeep Subramanian, Kazuya Kawakami, and
  Chris Dyer. 2016.
\newblock Neural architectures for named entity recognition.
\newblock In \emph{Proceedings of NAACL-HLT}, pages 260--270.

\bibitem[{Li and Liu(2005)}]{li2005learning}
Xiao-Li Li and Bing Liu. 2005.
\newblock Learning from positive and unlabeled examples with different data
  distributions.
\newblock In \emph{European Conference on Machine Learning}, pages 218--229.
  Springer.

\bibitem[{Li and Liu(2003)}]{li2003learning}
Xiaoli Li and Bing Liu. 2003.
\newblock Learning to classify texts using positive and unlabeled data.
\newblock In \emph{IJCAI}, volume~3, pages 587--592.

\bibitem[{Liu et~al.(2018)Liu, Shang, Ren, Xu, Gui, Peng, and Han}]{Liu18ner}
Liyuan Liu, Jingbo Shang, Xiang Ren, Frank Xu, Huan Gui, Jian Peng, and Jiawei
  Han. 2018.
\newblock Empower sequence labeling with task-aware neural language model.
\newblock \emph{AAAI Conference on Artificial Intelligence}.

\bibitem[{Liu et~al.(2015)Liu, Tang, Chen, and Wang}]{liu2015effects}
Shengyu Liu, Buzhou Tang, Qingcai Chen, and Xiaolong Wang. 2015.
\newblock Effects of semantic features on machine learning-based drug name
  recognition systems: word embeddings vs. manually constructed dictionaries.
\newblock \emph{Information}, 6(4):848--865.

\bibitem[{Liu et~al.(1994)Liu, Tan, and Shen}]{liu1994word}
Yuan Liu, Qiang Tan, and Kun~Xu Shen. 1994.
\newblock The word segmentation rules and automatic word segmentation methods
  for chinese information processing.
\newblock \emph{Qing Hua University Press and Guang Xi}, page~36.

\bibitem[{McCallum et~al.(2000)McCallum, Freitag, and
  Pereira}]{mccallum2000maximum}
Andrew McCallum, Dayne Freitag, and Fernando~CN Pereira. 2000.
\newblock Maximum entropy markov models for information extraction and
  segmentation.
\newblock In \emph{Icml}, volume~17, pages 591--598.

\bibitem[{McCallum and Li(2003)}]{mccallum2003early}
Andrew McCallum and Wei Li. 2003.
\newblock Early results for named entity recognition with conditional random
  fields, feature induction and web-enhanced lexicons.
\newblock In \emph{Proceedings of the seventh conference on Natural language
  learning at HLT-NAACL 2003-Volume 4}, pages 188--191. Association for
  Computational Linguistics.

\bibitem[{Nadeau et~al.(2006)Nadeau, Turney, and
  Matwin}]{nadeau2006unsupervised}
David Nadeau, Peter~D Turney, and Stan Matwin. 2006.
\newblock Unsupervised named-entity recognition: Generating gazetteers and
  resolving ambiguity.
\newblock In \emph{Conference of the Canadian Society for Computational Studies
  of Intelligence}, pages 266--277. Springer.

\bibitem[{Nguyen et~al.(2011)Nguyen, Li, and Ng}]{nguyen2011positive}
Minh~Nhut Nguyen, Xiao-Li Li, and See-Kiong Ng. 2011.
\newblock Positive unlabeled learning for time series classification.
\newblock In \emph{Twenty-Second International Joint Conference on Artificial
  Intelligence}.

\bibitem[{Pennington et~al.(2014)Pennington, Socher, and
  Manning}]{pennington2014glove}
Jeffrey Pennington, Richard Socher, and Christopher Manning. 2014.
\newblock Glove: Global vectors for word representation.
\newblock In \emph{Proceedings of the 2014 conference on empirical methods in
  natural language processing (EMNLP)}, pages 1532--1543.

\bibitem[{du~Plessis et~al.(2014)du~Plessis, Niu, and
  Sugiyama}]{du2014analysis}
Marthinus~C du~Plessis, Gang Niu, and Masashi Sugiyama. 2014.
\newblock Analysis of learning from positive and unlabeled data.
\newblock In \emph{Advances in neural information processing systems}, pages
  703--711.

\bibitem[{Riedel et~al.(2013)Riedel, Yao, McCallum, and
  Marlin}]{riedel2013relation}
Sebastian Riedel, Limin Yao, Andrew McCallum, and Benjamin~M Marlin. 2013.
\newblock Relation extraction with matrix factorization and universal schemas.
\newblock In \emph{Proceedings of the 2013 Conference of the North American
  Chapter of the Association for Computational Linguistics: Human Language
  Technologies}, pages 74--84.

\bibitem[{Rosasco et~al.(2004)Rosasco, Vito, Caponnetto, Piana, and
  Verri}]{rosasco2004loss}
Lorenzo Rosasco, Ernesto~De Vito, Andrea Caponnetto, Michele Piana, and
  Alessandro Verri. 2004.
\newblock Are loss functions all the same?
\newblock \emph{Neural Computation}, 16(5):1063--1076.

\bibitem[{Sang and Erik(2002)}]{Sang2002Introduction}
Tjong~Kim Sang and F~Erik. 2002.
\newblock Introduction to the conll-2002 shared task: language-independent
  named entity recognition.
\newblock \emph{Computer Science}, pages 142--147.

\bibitem[{Settles(2004)}]{settles2004biomedical}
Burr Settles. 2004.
\newblock Biomedical named entity recognition using conditional random fields
  and rich feature sets.
\newblock In \emph{Proceedings of the international joint workshop on natural
  language processing in biomedicine and its applications}, pages 104--107.
  Association for Computational Linguistics.

\bibitem[{Shen et~al.(2012)Shen, Wang, Luo, and Wang}]{shen2012linden}
Wei Shen, Jianyong Wang, Ping Luo, and Min Wang. 2012.
\newblock Linden: linking named entities with knowledge base via semantic
  knowledge.
\newblock In \emph{Proceedings of the 21st international conference on World
  Wide Web}, pages 449--458. ACM.

\bibitem[{Tjong Kim~Sang and De~Meulder(2003)}]{tjong2003introduction}
Erik~F Tjong Kim~Sang and Fien De~Meulder. 2003.
\newblock Introduction to the conll-2003 shared task: Language-independent
  named entity recognition.
\newblock In \emph{Proceedings of the seventh conference on Natural language
  learning at HLT-NAACL 2003-Volume 4}, pages 142--147. Association for
  Computational Linguistics.

\bibitem[{Xue(2003)}]{xue2003chinese}
Nianwen Xue. 2003.
\newblock Chinese word segmentation as character tagging.
\newblock \emph{International Journal of Computational Linguistics \& Chinese
  Language Processing, Volume 8, Number 1, February 2003: Special Issue on Word
  Formation and Chinese Language Processing}, 8(1):29--48.

\bibitem[{Yang et~al.(2017)Yang, Liu, and Yang}]{yang2017positive}
Pengyi Yang, Wei Liu, and Jean Yang. 2017.
\newblock Positive unlabeled learning via wrapper-based adaptive sampling.
\newblock In \emph{IJCAI}.

\bibitem[{Yang et~al.(2018)Yang, Chen, Li, He, and Zhang}]{yang2018distantly}
Yaosheng Yang, Wenliang Chen, Zhenghua Li, Zhengqiu He, and Min Zhang. 2018.
\newblock Distantly supervised ner with partial annotation learning and
  reinforcement learning.
\newblock In \emph{Proceedings of the 27th International Conference on
  Computational Linguistics}, pages 2159--2169.

\bibitem[{Zhang et~al.(2018)Zhang, Fu, Liu, and Huang}]{zhang2018adaptive}
Qi~Zhang, Jinlan Fu, Xiaoyu Liu, and Xuanjing Huang. 2018.
\newblock Adaptive co-attention network for named entity recognition in tweets.
\newblock In \emph{AAAI}.

\bibitem[{Zhang and Yang(2018)}]{zhang2018chinese}
Yue Zhang and Jie Yang. 2018.
\newblock Chinese ner using lattice lstm.
\newblock \emph{Proceedings of the 56th Annual Meeting of the Association for
  Computational Linguistics (ACL), 1554-1564.}

\bibitem[{Zhou and Su(2002)}]{zhou2002named}
GuoDong Zhou and Jian Su. 2002.
\newblock Named entity recognition using an hmm-based chunk tagger.
\newblock In \emph{proceedings of the 40th Annual Meeting on Association for
  Computational Linguistics}, pages 473--480. Association for Computational
  Linguistics.

\bibitem[{Zhu et~al.(2005)Zhu, Uren, and Motta}]{zhu2005espotter}
Jianhan Zhu, Victoria Uren, and Enrico Motta. 2005.
\newblock Espotter: Adaptive named entity recognition for web browsing.
\newblock In \emph{Biennial Conference on Professional Knowledge
  Management/Wissensmanagement}, pages 518--529. Springer.

\end{thebibliography}

\appendix

\section{Proof of Theorem 1}

\begin{proof}
Let denote $\hat{R}^s_{\ell}(f)$ the empirical estimation of $R_{\ell}(f)$ with $k$ randomly labeled examples. Since $\ell$ is bounded, $C_0$, $M$, and $B$ are finite. According to the Lemma in \cite{rosasco2004loss} we have:
\begin{equation} \label{eq:lemma}
\begin{split}
        &\rm{P}\{S \in \mathcal{D}|\sup_{f \in \mathcal{H}_R} |{R}_{\ell}(f)-\hat{R}^s_{\ell}(f)| \leq \epsilon\} \\
        &\geq 1 - 2N(\frac{\epsilon}{4L_M}) e^{-\frac{k\epsilon^2}{8B^2}}.
\end{split}   
\end{equation}
Then, the empirical estimation error of ${R}_{\ell}(f) - \hat{R}_{\ell}(f)$ in PU learning can be written as:
\begin{equation}
\begin{split}
&R_{\ell}(f) - \hat{R}_{\ell}(f) \\
&= \left(\mathbb{E}_{\rbm{X}}\ell(f(\bm{x}), 0)-\frac{1}{n_u} \sum_{i=1}^{n_u} \ell((f(x_i^u), 0)\right) \\
& +  {\pi_p} \left( \mathbb{E}_{\rbm{X}|\rm{Y}=1} \ell(f(\bm{x}), 1)-\frac{1}{n_p}\sum_{i=1}^{n_p} \ell(f(x_i^p), 1) \right)\\
& - {\pi_p} \left( \mathbb{E}_{\rbm{X}|\rm{Y}=1} \ell(f(\bm{x}), 0)-\frac{1}{n_p}\sum_{i=1}^{n_p} \ell(f(x_i^p), 0) \right)
\end{split}
\end{equation}
Thus,
\begin{equation}
\begin{split}
&|R_{\ell}(f) - \hat{R}_{\ell}(f)| \\
&\leq \left|\mathbb{E}_{\rbm{X}}\ell(f(\bm{x}), 0)-\frac{1}{n_u} \sum_{i=1}^{n_u} \ell((f(x_i^u), 0)\right| \\
& +  {\pi_p} \left| \mathbb{E}_{\rbm{X}|\rm{Y}=1} \ell(f(\bm{x}), 1)-\frac{1}{n_p}\sum_{i=1}^{n_p} \ell(f(x_i^p), 1) \right|\\
& + {\pi_p} \left| \mathbb{E}_{\rbm{X}|\rm{Y}=1} \ell(f(\bm{x}), 0)-\frac{1}{n_p}\sum_{i=1}^{n_p} \ell(f(x_i^p), 0) \right|
\end{split}
\end{equation}
Let $\rm{I}_{\ell}(\rbm{X}, 0)$ denote $$\mathbb{E}_{\rbm{X}}\ell(f(\bm{x}), 0)-\frac{1}{n_u} \sum_{i=1}^{n_u} \ell((f(x_i^u), 0).$$ According to Eq. \ref{eq:lemma}, we have:
\begin{equation}
\begin{split}
&\rm{P}\{S \in \mathcal{D}|\sup_{f \in \mathcal{H}_R} |\rm{I}_{\ell}(\rbm{X}, 0)| \leq \epsilon\} \\
& \geq 1-2N(\frac{\epsilon}{4L_M}) e^{-\frac{n_u\epsilon^2}{8B^2}}
\end{split}
\end{equation}
Similarly, let $\rm{I}_{\ell}(\rbm{X}|\rm{Y}=1, 1)$ denote $$\mathbb{E}_{\rbm{X}|\rm{Y}=1} \ell(f(\bm{x}), 1)-\frac{1}{n_p}\sum_{i=1}^{n_p} \ell(f(x_i^p), 1),$$ and $\rm{I}_{\ell}(\rbm{X}|\rm{Y}=1, 0)$ denote $$\mathbb{E}_{\rbm{X}|\rm{Y}=1} \ell(f(\bm{x}), 0)-\frac{1}{n_p}\sum_{i=1}^{n_p} \ell(f(x_i^p), 0),$$
we have:
\begin{equation}
\begin{split}
&\rm{P}\{S \in \mathcal{D}|\sup_{f \in \mathcal{H}_R} |\rm{I}_{\ell}(\rbm{X}|\rm{Y}=1, 1)| \leq \epsilon\} \\
& \geq 1-2N(\frac{\epsilon}{4L_M}) e^{-\frac{n_p\epsilon^2}{8B^2}},
\end{split}
\end{equation}
and 
\begin{equation}
\begin{split}
&\rm{P}\{S \in \mathcal{D}|\sup_{f \in \mathcal{H}_R} |\rm{I}_{\ell}(\rbm{X}|\rm{Y}=1, 0)| \leq \epsilon\} \\
& \geq 1-2N(\frac{\epsilon}{4L_M}) e^{-\frac{n_p\epsilon^2}{8B^2}},
\end{split}
\end{equation}
Therefore, 
\begin{equation}
\begin{split}
&\rm{P}\{S \in \mathcal{D}|\sup_{f \in \mathcal{H}_R} |R_{\ell}(f)-\hat{R}_{\ell}(f)| \leq (1+2\pi_p)\epsilon \} \\
& \geq \min (1-2N(\frac{\epsilon}{4L_M}) e^{-\frac{n_p\epsilon^2}{8B^2}}, \\
& \quad \quad 1-2N(\frac{\epsilon}{4L_M}) e^{-\frac{n_u\epsilon^2}{8B^2}}) \\
&= 1-2N(\frac{\epsilon}{4L_M}) e^{-\frac{\min(n_p, n_u)\epsilon^2}{8B^2}}
\end{split}
\end{equation}
The theorem follows replacing $\epsilon$ with $\frac{1}{1+2\pi_p}\epsilon$.
\end{proof}

\end{document}